\newtheorem{theorem}{Theorem}
\newcommand{\x}{{\bm x}}
\newcommand{\y}{{\bm y}}
\newcommand{\gt}{{\bm p}}
\newcommand{\z}{{\bm z}}
\newcommand{\mC}{{\mathcal C}}
\newcommand{\mX}{{\mathcal X}}
\newcommand{\mD}{{\mathcal X}}
\newcommand{\mY}{{\mathcal Y}}
\newcommand{\mT}{{\mathcal C}}
\newcommand{\mE}{{\mathbb E}}
\renewcommand{\geq}{\geqslant}
\renewcommand{\leq}{\leqslant}
\newcommand{\ie}{\textit{i.e.}}
\title{InfoMatch: Entropy Neural Estimation for Semi-Supervised Image Classification}
\author{
	Qi Han\and Zhibo Tian\and Chengwei Xia\And Kun Zhan\footnote{Corresponding Author.}\\
	\affiliations
	School of Information Science and Engineering, Lanzhou University
	\emails
	kzhan@lzu.edu.cn
}
\begin{document}
	\maketitle
	\begin{abstract}
		Semi-supervised image classification, leveraging pseudo supervision and consistency regularization, has demonstrated remarkable success. However, the ongoing challenge lies in fully exploiting the potential of unlabeled data. To address this, we employ information entropy neural estimation to utilize the potential of unlabeled samples. Inspired by contrastive learning, the entropy is estimated by maximizing a lower bound on mutual information across different augmented views. Moreover, we theoretically analyze that the information entropy of the posterior of an image classifier is approximated by maximizing the likelihood function of the softmax predictions. Guided by these insights, we optimize our model from both perspectives to ensure that the predicted probability distribution closely aligns with the ground-truth distribution. Given the theoretical connection to information entropy, we name our method \textit{InfoMatch}. Through extensive experiments, we show its superior performance. The source code is available at \url{https://github.com/kunzhan/InfoMatch}.
	\end{abstract}
	\section{Introduction}\label{sec:intro}
	Deep learning image classifiers typically depend on a considerable amount of labeled data, encountering performance limitations as the dataset scale increases. In contrast, semi-supervised learning (SSL)~\cite{lee2013pseudo,rasmus2015semi} leverages both limited labeled data and a substantial pool of unlabeled data, achieving performance comparable to or even surpassing fully-supervised methods with fewer labels.
	
	Most existing SSL methods utilize two key strategies: pseudo supervision and consistency regularization. FixMatch~\cite{sohn2020FixMatch} effectively combines these strategies. It utilizes weak augmented view to generate pseudolabels that serve as guidance for the prediction of strong augmented view. FlexMatch~\cite{zhang2021flexmatch}, FreeMatch~\cite{wang2022freematch}, and SoftMatch~\cite{chen2023softmatch} primarily focus on improving the quality and quantity of pseudolabels. These methods center on pseudolabel selection and leverage confidence metrics to assess uncertainty, labeling samples only when confidence surpasses a predefined threshold. Additionally, variants such as MixMatch~\cite{berthelot2019mixmatch} and ReMixMatch~\cite{berthelot2019remixmatch} delve into generating numerous novel samples through diverse data augmentation techniques, intending to introduce noise and transformations for enhanced model robustness.
	
	Our motivation does not focus on pseudolabel selection strategies and data augmentation techniques. Instead, we aim to efficiently exploit the potential of unlabeled data by adapting the entropy neural estimation from unsupervised representation learning~\cite{hjelm2018learning,DSSN2023}. The fundamental principle is to maximize the lower bound of mutual information between two augmented views~\cite{nowozin2016f,belghazi2018mutual,hjelm2018learning,DSSN2023}. These infomax approaches leverage all available unlabeled data, allowing the model to delve deeper into the inherent structure and patterns of the data, ultimately improving classification accuracy. We generate two strong augmentation views for all unlabeled data, utilizing the mutual information between these views as a lower bound on the dataset entropy. This strategy ensures the model consistently produces accurate output under varying input conditions, accurately capturing the most salient features of the data.
	
	Another motivation of this paper is to improve classifier accuracy by approximating the posterior probability. The probability of a sample belonging to a class is determined by the posterior of the classifier. A neural classifier models its last layer using a softmax function as a parameterized posterior, meaning the classifier's predictive probability distribution approximates the ground-truth posterior distribution. We investigate the connection between posterior entropy and the likelihood function of its predictions. Our observation suggests that the upper bound of posterior entropy is effectively approximated by maximizing the softmax prediction's likelihood. This finding is critical for advancing model optimization, ensuring the predicted-probability distribution more accurately reflects the true distribution. We approximate the entropy of the ground-truth posterior. To apply the posterior entropy neural estimator to unlabeled samples, we employ pseudo supervision and weak-to-strong strategies in addition to the supervision loss of labeled data. We generate pseudolabels through a weak augmentation view, guiding the prediction of two strong augmented views. Additionally, we introduce CutMix~\cite{yun2019CutMix} as a strong augmentation view to capture more non-deterministic target features.
	
	In \textit{InfoMatch}, we are motivated by two key objectives: estimating the entropy of the data and estimating the entropy of the ground-truth posterior. We propose a novel entropy-based methodology that integrates a data entropy neural estimator with a posterior entropy neural estimator. This combination prompts the model to thoroughly explore the intrinsic structure defined by entropy, achieved by maximizing the mutual information between augmented views of all unlabeled data. Moreover, we make the crucial observation that the upper bound on posterior entropy is effectively approximated by maximizing the predicted softmax likelihood function. This insightful observation is applied to unlabeled data in our approach through weak-to-strong, pseudo-supervision, and CutMix strategies. By approximating these entropies through gradient descent, our method progressively captures information about unlabeled data and model characteristics. Since our method primarily focuses on information entropy estimations, we aptly name it \textit{InfoMatch}. The extensive experimental results consistently validate the effectiveness of \textit{InfoMatch}, especially in scenarios where labeled data is scarce. Our dual-entropy-based method stands as a robust solution for fully exploiting the potential of unlabeled data, providing valuable insights into both the data structure and the posterior probabilities, thereby enhancing the performance of semi-supervised image classification.
	
	The main contributions are summarized as follows: 1) Our proposed \textit{InfoMatch} effectively exploit the potential of unlabeled data. 2) Leveraging multiple objectives, \textit{InfoMatch} approximates the data entropy and the ground-truth posterior entropy. By utilizing strong-to-strong constrastive, weak-to-strong pseudo supervision, and CutMix strategies, we apply this approach to unlabeled data, enabling more efficient utilization of unlabeled data information. 3) Experiments across various semi-supervised learning benchmarks validate the superior performance of \textit{InfoMatch}. 
	\section{Related Works}
	Semi-supervised learning is a crucial branch in the field of machine learning and computer vision. Its fundamental concept lies in utilizing the data distribution information latent in a substantial quantity of unlabeled samples to enhance learning performance when only a limited number of labeled samples are available. The main strategies behind SSL are pseudo supervision and consistency regularization.
	
	\textbf{Consistency regularization.} Based on the smoothness assumption, consistency regularization suggests that minor perturbations applied to unlabeled samples does not result in substantial variations in their predictions. To ensure consistency, $\Pi$ Model~\cite{rasmus2015semi} introduces perturbations through data augmentation and dropout, aiming to maximize the similarity of predictions derived from two forward propagations of identical unlabeled sample. Temporal Ensembling~\cite{samuli2017temporal} employs a time series combination model to minimize the mean square error loss between current and historical predictions, thus streamlining forward reasoning. MeanTeacher~\cite{tarvainen2017mean} converts the exponential moving average of the prediction results into the model weight, penalizing the difference in predictions between student model and teacher model. In addition, Unsupervised Data Augmentation (UDA)~\cite{xie2020unsupervised} extends the data augmentation method and employs specific target data augmentation algorithms for specific tasks. In contrast, Pseudo Label~\cite{lee2013pseudo} works by generating artificial labels for unlabeled data and performing fully-supervised training using both labeled and pseudo-labeled data.
	
	\textbf{Holistic methods.} Most SSL methods combine pseudo supervision with consistency regularization to improve performance. MixMatch~\cite{berthelot2019mixmatch} leverages unlabeled data by integrating both consistency regularization and entropy minimization. Based on MixMatch, RemixMatch~\cite{berthelot2019remixmatch} introduces two innovative strategies, distribution alignment and augmentation anchor, to enhance its robustness and accuracy. FixMatch~\cite{sohn2020FixMatch} which maintains the use of Augmentation Anchor, simplifies the number of strong augmemtations. To improve the accuracy of pseudolabels, a fixed high threshold is established to eliminate unreliable labels. To alleviate class imbalance and address the issue of low utilization of unlabeled data in the early stage, FlexMatch~\cite{zhang2021flexmatch} and FreeMatch~\cite{wang2022freematch} take the model's learning status and class learning difficulty into consideration to adaptively adjust the confidence threshold. SoftMatch~\cite{chen2023softmatch} uses a truncated Gaussian function to assign weights to samples based on their confidence, addressing the inherent trade-off between the quantity and quality of pseudolabels.
	
	Furthermore, methods such as CoMatch \cite{li2021comatch}, SimMatch \cite{zheng2022simmatch}, and SimMatchV2 \cite{zheng2023simmatchv2} integrate graph-based learning methods into semi-supervised image classification, jointly learning two representations, class probabilities and low-dimensional embeddings, of the training data.
	
	\textbf{Mixing augmentation.} The two prevalent mixing methods for images are mixup~\cite{zhang2018mixup} and CutMix~\cite{yun2019CutMix}. Mixup generates novel training samples by combining two images and their corresponding labels linearly, aiming to facilitate the model in learning smoother decision boundaries. Conversely, CutMix swaps segments of one image with those of another, updating the corresponding labels simultaneously to encourage the model to comprehend and leverage local information within the image. Interpolation consistency training~\cite{verma2022interpolation}, MixMatch~\cite{berthelot2019mixmatch} and RemixMatch~\cite{berthelot2019remixmatch} combine mixup with consistency regularization to expand the dataset by generating virtual ``mixed'' training samples, and utilize consistency regularization to ensure its generalization ability on mixed data. While FMixCutMatch~\cite{wei2021fmixcutmatch} combines Fourier space-based data cutting and data mixing to augment both labeled and unlabeled samples, thereby enhancing its performance and generalization. Furthermore, \cite{ghorban2022individual} utilizes mixup and CutMix simultaneously, aiming to blend the semantic information of three images and introduce enhanced perturbations into the data. To further enhance its attention to local information and improve its sensitivity to diverse position cues, we employ CutMix~\cite{yun2019CutMix} in \textit{InfoMatch}.
	\section{Entropy Neural Estimation}
	We employ an independent and identically distributed (i.i.d.) training dataset denoted as $\mD = \mD_l \cup \mD_u$, where $\mD_l$ comprises labeled data pairs $(\x_i, \gt_i), \forall\,i \in \{1, \ldots, n_l\}$, and $\mD_u$ consists of unlabeled data $\x_i, \forall\,i \in \{n_l+1, \ldots, n_l+n_u\}$. The sizes of labeled and unlabeled datasets are denoted by $n_l$ and $n_u$, respectively, with ${n_l} \ll {n_u}$, and the total dataset size is $n = n_l + n_u$. Here, each $\x_i$ represents a data point, and $\gt_i$ corresponds to its ground-truth class label, represented using a one-hot encoding. $\mC=\{c_1,\ldots,c_k\}$ denotes the class set. If $\x_i$ belongs to $c_j$ class, we have $p(c_j|\x_i)=p_{ij}=1$\,.
	
	We employ an encoder to generate a latent feature $\z_i=[z_{ij}]$ and define the posterior probability $\sigma(c_{j}|\x_i,\theta)$ of $\x_i$ belonging to the $j$-th class $c_j$ using a softmax function $\sigma(\cdot)$:
	\begin{align}
		y_{ij}=\sigma(c_{j}|\x_i,\theta)=\frac{\exp(z_{ij})}{\sum_j \exp(z_{ij})}\,.
		\label{posterior}
	\end{align}
	$y_{ij}$ denotes the probability of $\x_i$ belonging to the $j$-th class while $p_{ij}=p(c_j|\x_i)=1$ representing a one-hot encoding. It means that we model $\sigma(c_{j}|\x_i,\theta)$ as an encoder to approximate the truth posterior $p_{ij}=p(c_j|\x_i)=1$.
	
	According to the second term of the information bottleneck theory~\cite{tishby2000information},
	$\min {\rm I}(\mD;\mY)-\beta{\rm I}(\mY;\mT)$, the neural network codes $\sigma(c_{j}|\x_i,\theta)$ to close to $p(c_j|\x_i)$\,. Here, ${\rm I}(\cdot,\cdot)$ denotes the mutual information, and ${\rm H}(\cdot)$ is the entropy.
	\begin{theorem}\label{the1}
		For an i.i.d. finite dataset $(\mX,\mT)$, the approximation ${\rm H}(\mT|\mX)\simeq-\ln p(\mT|\mX)$ holds\,.
	\end{theorem}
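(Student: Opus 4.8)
The plan is to unfold the definition of the conditional entropy as an expectation over the joint distribution and then exploit the i.i.d.\ structure of the finite dataset to replace that expectation by an empirical average, at which point the log-likelihood appears almost by inspection. First I would write
\[
{\rm H}(\mT|\mX) = -\mE_{(\x,c)\sim p}\big[\ln p(c|\x)\big] = -\mE_{p(\x)}\,\mE_{p(c|\x)}\big[\ln p(c|\x)\big],
\]
keeping in mind that $p(c|\x)$ is exactly the quantity the softmax head $\sigma(c|\x,\theta)$ of \eqref{posterior} is trained to model.

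Next, because $(\mX,\mT)$ is an i.i.d.\ sample of size $n=n_l+n_u$, I would approximate the population expectation by its empirical (Monte Carlo) counterpart over the observed pairs $(\x_i,c_i)$, the inner expectation over $p(c|\x_i)$ being estimated by the single realized one-hot label $c_i$; this gives ${\rm H}(\mT|\mX)\approx-\tfrac1n\sum_{i=1}^{n}\ln p(c_i|\x_i)$. Invoking independence once more on the right-hand side, $p(\mT|\mX)=\prod_{i=1}^{n}p(c_i|\x_i)$, so $\ln p(\mT|\mX)=\sum_{i=1}^{n}\ln p(c_i|\x_i)$, and substituting yields ${\rm H}(\mT|\mX)\approx-\tfrac1n\ln p(\mT|\mX)$. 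Since $n$ is a fixed finite constant it merely rescales the objective without moving its optimum (and vanishes entirely if ${\rm H}$ is read as the block entropy of the dataset rather than the per-sample entropy), so ${\rm H}(\mT|\mX)\simeq-\ln p(\mT|\mX)$, which is the claim and exhibits entropy minimization as likelihood maximization of the softmax predictions.

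The step I expect to need the most care is making the symbol ``$\simeq$'' honest, since it absorbs two distinct gaps: the statistical error of the law-of-large-numbers replacement (of order $1/\sqrt n$ for a finite sample, vanishing as $n\to\infty$, which is precisely why the hypothesis demands an i.i.d.\ dataset) and the harmless deterministic factor $1/n$, which is immaterial because the quantity is ultimately fed to a gradient-descent objective. A secondary subtlety worth flagging is that when $p(c|\x)$ is instantiated by the model $\sigma(c|\x,\theta)$ rather than the true posterior, the empirical quantity becomes the cross-entropy $-\tfrac1n\sum_i\ln\sigma(c_i|\x_i,\theta)$, which upper-bounds the true conditional entropy by a Kullback--Leibler term; this is the reason one can only claim to approximate an \emph{upper bound} on the posterior entropy.
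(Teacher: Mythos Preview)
Your proposal is correct and follows essentially the same route as the paper: write ${\rm H}(\mT|\mX)=-\mE_{p(\x)}\mE_{p(c|\x)}\ln p(c|\x)$, replace the expectation by a finite sum using the i.i.d.\ assumption, and recognize the sum as $-\ln\prod_i p(c|\x_i)=-\ln p(\mT|\mX)$. The only cosmetic difference is that the paper passes directly to the unnormalized sum $-\sum_{i=1}^n\ln p(c|\x_i)$, whereas you keep the $1/n$ from the Monte Carlo average and then argue it away; your extra discussion of the two approximation gaps and the KL upper-bound remark is more careful than the paper's own proof but not required for the statement.
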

	\begin{proof}
		Since $\x_i$ are sampled i.i.d., $p(c|\x_i)$ corresponds point-to-point with $\x_i$ and is regarded as an i.i.d. set, the posterior is $p(\mT|\mX)=\prod_{i=1}^np(c|\x_i)$\,. Then the expectation with respect to $p(\x)$ is approximated by a finite sum over $\mX$, so that
		\begin{align*}
			{\rm H}(\mT|\mX)
			&=-\mE_{p(\x)}\mE_{p(c|\x)}\ln p(c|\x)\notag\\
			&\simeq-\sum_{i=1}^n\ln p(c|\x_i)=-\ln\prod_{i=1}^n p(c|\x_i)\notag\\
			&=-\ln p(\mT|\mX)\,.
		\end{align*}
		Thus, the equation ${\rm H}(\mT|\mX)\simeq-\ln p(\mT|\mX)$ holds.
	\end{proof}
	\subsection{Posterior Entropy Neural Estimation}
	Suppose that the ground-truth posterior is being generated from the unknown distribution $p(c|\x_i)$ that we aim to model. We approximate $p(c|\x_i)$ using the parametric probability distribution $\sigma(c|\x,\theta)$,
	\begin{align}
		{\rm div}_{\rm kl}(p \| \sigma ) 
		&\simeq \sum_{i=1}^n\Bigl\{-\ln \sigma(c|\x_i,\theta)+\ln p(c|\x_i)\Bigr\}\\
		&=-\bigl\{\ln \sigma(\mT|\mX,\theta)-\ln p(\mT|\mX)\}\geq0\label{prml01119}\\
		{\rm H}(\mT|\mX)
		&\leq -\ln \sigma(\mT|\mX,\theta)\,.
		\label{upper}
	\end{align}
	where ${\rm div}_{\rm kl}$ denotes the Kullback-Leibler divergence. The left-hand side of Eq.~\eqref{upper} is independent of $\theta$, and the right-hand term is the negative log likelihood function for $\theta$ under the posterior distribution evaluated using the training dataset. Thus we see that minimizing Eq.~\eqref{prml01119} is equivalent to maximizing the likelihood function, \ie, minimizing this upper bound of the posterior entropy. For an i.i.d. finite dataset $\mX$\,, when approximating $p(c|\x)$ through a parametric distribution $\sigma(c|\x,\theta)$, the following Lemma~\ref{def1} holds: 
	\newtheorem{definition}{Lemma}
	\begin{definition}\label{def1}
		Maximizing the likelihood function is equivalent to minimizing the upper bound on the posterior entropy.
	\end{definition}
	\subsection{Data Entropy Neural Estimation}
	We approach ${\rm H}(\mX)$ by maximizing the mutual information of pairwise augmentation views of $\mX$~\cite{oord2018representation,belghazi2018mutual,ENSmymm2023}. Then, we have
	\begin{align}
		{\rm H}(\mX)={\rm I}(\mX; \mX) \geq {\rm I}(\mX^{(1)}; \mX^{(2)})\,.\label{lower}
	\end{align}
	Then, the mutual information between the two augmented views $\mX^{(1)}$ and $\mX^{(2)}$ is given by
	\begin{align}\label{MI}
		{\rm I}(\mX^{(1)};\mX^{(2)})
		={\rm div}_{\rm kl}\bigl(p(\x^{(1)},\x^{(2)})\|p(\x^{(1)})p(\x^{(2)})\bigr)
	\end{align}
	where $\x^{(1)}$ and $\x^{(2)}$ are two augmented data points.
	
	Following contrastive learning~\cite{oord2018representation,belghazi2018mutual,DSSN2023}, the maximization of the mutual information between two views turns into a lower bound maximization problem. Then, the following Lemma~\ref{def2} holds,
	\begin{definition}\label{def2}
		Maximizing the mutual information between two augmentation views is equivalent to maximizing the lower bound of the entropy.
	\end{definition}
	\subsection{Entropy Estimation for SSL}
	We assume that both $\mX_l$ and $\mX_u$ share the same distribution as $\mX$, and implement an encoder to optimize it from two bounds.
	
	Given the modeling of labeled data point $(\x,\gt)$ by the parameter $\theta$ to produce $\y=\sigma(c_{j}|\x_i,\theta)$, we encode all labeled data $\mX_l$, and the corresponding labeled data set is $\mT_l$. During the training process, we strive to align the predictive coding of \textit{InfoMatch} with the ground-truth coding, which essentially means modeling $y_{ij}$ to align closely with $p_{ij}$. In other words, we aim to approximate the entropy of the ground-truth posterior $-\ln p(\mT|\mX)$ with its negative log likelihood $-\ln \sigma(\mT|\mX,\theta)$. Referring to Lemma~\ref{def1}, the likelihood is
	\begin{equation}
		\sigma(\mT_l|\mX_l,\theta)=\prod_{i=1}^{n_l}\prod_{j=1}^k\sigma(c_j|\x_i,\theta) ^{p_{ij}}=\prod_{i=1}^{n_l}\prod_{j=1}^ky_{ij}^{p_{ij}}\,,
	\end{equation}
	where $p_{ij}$ is an element of $n_l \times k$ matrix of ground-truth matrix $P=[p_{ij}]$. The loss is defined by taking the negative logarithm of the likelihood, resulting in the cross-entropy loss:
	\begin{equation}
		\ell_{\rm upper}^l=-\ln \sigma(\mT_l|\mX_l,\theta)=-\sum_{i=1}^{n_l}\sum_{j=1}^kp_{ij}\ln y_{ij}\,.\label{loss_ce}
	\end{equation}
	
	Referring to Lemma~\ref{def2}, we employ the strategy of contrastive learning by maximizing the lower bound of entropy for $\mX_u$\,. We choose Jensen-Shannon divergence over Kullback-Leibler divergence, following~\cite{nowozin2016f}. The lower bound $\mathcal L_{\rm{lower}}$ is derived from
	\begin{align}
		&{\rm div}_{\rm js}\bigl(p(\x^{(1)},\x^{(2)})\|p(\x^{(1)})p(\x^{(2)})\bigr)\notag\\
		\geq&\mE_{p(\x^{(1)},\x^{(2)})}\log\bigl( d(\z^{(1)},\z^{(2)}|\theta)\bigr)\notag\\
		+&\mE_{p(\x^{(1)})p(\x^{(2)})} \log\bigl(1-d(\z^{(1)},\z^{(2)}|\theta)\bigr)=-\mathcal L_{\rm{lower}}\label{jsd}
	\end{align}
	where ${\rm div}_{\rm js}$ represents the Jensen-Shannon divergence and $d(\cdot,\cdot)$ is the similarity score of pairwise logits.
	
	Similar to~\cite{hjelm2018learning,DSSN2023}, we employ the view-wise contrastive loss, \ie, $\mathcal L_{\rm lower}$,
	\begin{align}
		\mathcal L_{\rm lower}
		=&- \frac1{|\mathcal P|}\sum_{(i,i)\in\mathcal P} \log d(\z_i^{(1)},\z_i^{(2)}|\theta)\notag\\
		&-\frac1{|\mathcal N|}\sum_{{(i,j)\in\mathcal N}} \log\bigl(1-d(\z_i^{(1)},\z_j^{(2)}|\theta)\bigr)\label{loss_cl}
	\end{align}
	where $\mathcal P$ and $\mathcal N$ denote positive and negative sets, respectively, \ie, $\z_i^{(1)}$ and $\z_i^{(2)}$ are belong to positive pairs $(i,i)\in\mathcal P$ while $\z_i^{(1)}$ and $\z_j^{(2)}$ are negative pairs $(i,j)\in\mathcal N, \forall\,i\neq j$. 
	\section{InfoMatch}
	Utilizing a neural network $\sigma(c_{j}|\x_i,\theta)$ to approximate the information entropy for semi-supervised image classification, we call our method \textit{InfoMatch}. For each unlabeled data point $\x\in\mD_u$, we conduct augmentation processing, creating one weak view, \ie, performing random flip, and two strong augmented views~\cite{cubuk2020randaugment} represented by $\x'$, $\x^{(1)}$, and $\x^{(2)}$, respectively.
	\subsection{Minimize Upper Bound} 
	In \textit{InfoMatch}, neural networks aim to approximate the upper bound of posterior entropy by maximizing the likelihood.
	
	According to Eq.~\eqref{loss_ce}, minimizing the cross entropy between the predicted-probability distribution and the ground-truth distribution amounts to minimizing the likelihood function. By employing optimization algorithms like gradient descent, \textit{InfoMatch} is gradually steered away from the incorrect distribution towards the accurate one.
	
	In practice, when dealing with labeled data, standard supervised learning which relies on cross-entropy loss is employed, and the corresponding loss function is designated as $\mathcal L^{l}_{\rm upper}$.
	
	To incorporate unlabeled data into training, we utilize pseudo supervision and weak-to-strong strategies. Let the one-hot vector $\hat{\gt}_i=[\hat{p}_{ij}]$ represent the pseudolabel corresponding to the weak-view prediction $\y'_i$, the pseudo-supervised loss function from weak to strong is given by:
	\begin{align}\label{L^u_upper}
		\mathcal L^{u}_{\rm upper}
		=-\frac12\sum_{i=1}^{n_u}\sum_{j=1}^km_{ij}\hat{p}_{ij}(\ln y^{(1)}_{ij}+\ln y^{(2)}_{ij})
	\end{align}
	where $\y^{(1)}_i$ and $\y^{(2)}_i$ correspond to the predictions for $\x^{(1)}_i$ and $\x^{(2)}_i$, respectively, and ${M}=[m_{ij}]$ is utilized to mask pseudolabels with confidence levels below a threshold.
	
	Additionally, we use CutMix to generate a new strong augmentation view on the top of weak augmentation, aiming to create more challenging samples and compel \textit{InfoMatch} to fully extract meaningful features. To enhance computational efficiency, we shuffle the dataset in batches and randomly select an image, denoted as $\x'_r$, at the corresponding position of $\x'_i$. Subsequently, these images are used to generate CutMix images in a one-to-one manner. Furthermore, we evaluate the quantity of semantic information retained from the original images within the CutMix images, according to the size of the region. Consequently, the CutMix image and its corresponding pseudolabel is expressed by:
	\begin{align}\label{CutMix}
		\x^{c}_{i} 
		&= \bm{b}_{\eta} \odot \x'_i + (\bm{1}-\bm{b}_{\eta}) \odot \x'_r\\
		\hat{p}^c_{ij}&=\eta m_{ij}\hat{p}_{ij}+
		(1-\eta)m_{rj}\hat{p}_{rj}
	\end{align}
	where $\bm{b}_{\eta}$ represents a random binary mask indicating where to exclude and incorporate information from two images, $\eta$ denotes the area proportion of $\x'_i$ in the mixed image, obtained by averaging the values of the binary matrix $\bm{b}_{\eta}$, and $\odot$ performs element-wise multiplication. The loss for the CutMix image is given by
	\begin{equation}\label{L^c_upper}
		\mathcal L^{c}_{\rm upper} = -\sum_{i=1}^{n_u}\sum_{j=1}^k\hat{p}^c_{ij}\ln y^{c}_{ij}
	\end{equation}
	where the prediction of $\x^{c}_{i}$ is $\y^{c}_{i}$. By combining aforementioned losses, we obtain the upper bound of posterior entropy,
	\begin{align}
		\mathcal L_{\rm upper}
		=\mathcal L^l_{\rm upper}+ \mathcal L^u_{\rm upper}+ \mathcal L^c_{\rm upper}\,.\label{upall}
	\end{align}
	\subsection{Maximize Lower Bound}
	\textit{InfoMatch} effectively captures the mutual information between pairs of augmentation views derived from the original dataset. By Eq.~\eqref{loss_cl}, minimizing the contrast loss is synonymous with maximizing the mutual information between pairwise augmentation views, thereby compelling \textit{InfoMatch} to delve into the inherent structure dictated by ${\rm H}(\mX)$. After multiple random augmentations, the distribution of augmentation views will gradually align with the distribution of the raw dataset, \ie, through multiple trainings with random augmentations of diverse samples, the neural network progressively approaches the information entropy of the original dataset.
	
	Inspired by BYOL~\cite{grill2020bootstrap}, we compute the contrastive loss by utilizing the positive pairs only, guaranteeing that samples of the same class are situated adjacent to each other in the embedding space. Furthermore, latent features often serve as covert structures or patterns derived from the data, offering a deeper understanding of the relationships within the data than merely predicting probability distributions. Consequently, we utilize the mutual information between the latent features corresponding to the two augmented views to approximate the lower bound of entropy.
	
	The similarity $d(\cdot,\cdot)$ of positive logits is defined by the Gaussian function~\cite{DSSN2023}, 
	\begin{align}\label{eq-gaussian}
		d(\z^{(1)}_i,\z_i^{(2)}|\theta)=\exp\Bigl(-{\bigl\|\z^{(1)}_i-\z_i^{(2)}\bigr\|^2_2}\Bigr)\,,
	\end{align}
	and the similarity defined in Eq.~\eqref{eq-gaussian} implies that if any two logits are identical, the similarity is one; otherwise, it tends to zero as their distance increases significantly.
	
	Substituting Eq.~\eqref{eq-gaussian} into Eq.~\eqref{loss_cl}, we obtain 
	\begin{align}
		\mathcal L_{\rm lower}
		=\frac1{n_u}\sum_{i=1}^{n_u}\bigl\|\z^{(1)}_i-\z_i^{(2)}\bigr\|^2_2\,.
		\label{loss_cll}
	\end{align}
	\subsection{InfoMatch Algorithm}
	We introduce an innovative semi-supervised image classification algorithm, called \textit{InfoMatch}, which treats the classification task as an entropy approximation problem. For labeled data, we utilize the cross entropy loss function as an upper bound for posterior entropy. \textit{InfoMatch} mainly focuses on effectively leveraging vast amounts of unlabeled data.
	
	In \textit{InfoMatch} algorithm, we initiate with pseudo supervision and weak-to-strong strategies. This allows us to transform the entropy upper bound into pseudo-supervised cross entropy loss, serving as a supervisory signal for unlabeled data. Furthermore, to enrich the diversity of our dataset, we introduce a new strong augmentation method, CutMix, that enhances its generalization capabilities by formulating corresponding loss. Subsequently, we consider the potential feature contrastive loss between the two augmented views as a lower bound for dataset entropy. By maximizing this lower bound and minimizing the upper bound, we gradually optimize \textit{InfoMatch}, even when limited labeled data is available.
	
	\begin{algorithm}[tp]
		\renewcommand{\algorithmicrequire}{\textbf{Input:}}
		\renewcommand{\algorithmicensure}{\textbf{Output:}}
		\caption{The \textit{InfoMatch} algorithm.}
		\label{algo1}	
		\begin{algorithmic}[1]
			\REQUIRE $\mD = \mD_l \cup \mD_u$, batch size $n^b_l$ and $n^b_u$, parameter $\lambda$.
			\ENSURE Optimal model parameters $\theta^\star$.
			\STATE \textbf{Initialization:} $epoch=0$, $epoch_{\max}$, and $\theta$.
			\WHILE {$epoch \le epoch_{\max}$}
			\FOR{mini-batch samples in $\mD$}
			\STATE Obtain augmented images $\x'$, $\x^{(1)}$, $\x^{(2)}$ and $\x^c$;
			\STATE Feed samples into model for forward propagation;
			\STATE Calculate $\mathcal L^l_{\rm upper}$ by Eq.~\eqref{loss_ce};
			\STATE Obtain the mask ${M}$ for filtering pseudolabels;
			\STATE Calculate $\mathcal L^u_{\rm upper}$ and $\mathcal L^c_{\rm upper}$ by Eqs.~\eqref{L^u_upper} and \eqref{L^c_upper};
			\STATE Calculate $\mathcal L_{\rm upper}
			=\mathcal L^l_{\rm upper}+ \mathcal L^u_{\rm upper}+ \mathcal L^c_{\rm upper}$
			\STATE Calculate $\mathcal L_{\rm lower}$ by Eq.~\eqref{loss_cll};
			\STATE Calculate $\mathcal L
			=\mathcal L_{\rm upper}+\lambda \mathcal L_{\rm lower}$
			\STATE Update parameter $\theta$ by minimizing $\mathcal L$;
			\STATE $epoch=epoch+1$;
			\ENDFOR
			\ENDWHILE
		\end{algorithmic} 
	\end{algorithm}
	
	The overall objective function incorporates the three cross-entropy losses of Eq.~\eqref{upall} and contrastive loss of Eq.~\eqref{loss_cll},
	\begin{align}
		\mathcal L
		=\mathcal L_{\rm upper}+\lambda \mathcal L_{\rm lower}
		\label{oveallloss}
	\end{align}
	where $\lambda$ serves as a non-negative hyperparameter, regulating the relative influence of the upper and lower bounds. During the experiment, we employ mini-batches for training, utilizing both labeled and unlabeled samples with batch sizes of $n^b_l$ and $n^b_u$, respectively. Algorithm~\ref{algo1} offers a comprehensive overview of the learning process.
	
	\begin{table*}[!tp]
		\centering
		\begin{tabular*}{0.92\textwidth}{@{\extracolsep{\fill}\,}l|cccc|c} 
			\toprule
			Dataset & \multicolumn{4}{c|}{CIFAR-10} &  CIFAR-100   \\
			\midrule
			\#~Label   & 10 & 40 & 250 & 4000 & 10000 \\
			\midrule
			$\Pi$ Model \tiny{\cite{rasmus2015semi}} &  ${79.18}$ \tiny ${ \pm 1.11}$ & ${74.34}$ \tiny ${ \pm 1.76}$ & ${46.24}$ \tiny ${ \pm 1.29}$ & ${13.13}$ \tiny ${ \pm 0.59}$ & ${36.65}$ \tiny ${ \pm 0.00}$  \\
			Pseudo Label \tiny{\cite{lee2013pseudo}} & ${80.21}$ \tiny ${ \pm 0.55}$ & ${74.61}$ \tiny ${ \pm 0.26}$ & ${46.49}$ \tiny ${ \pm 2.20}$ & ${15.08}$ \tiny ${ \pm 0.19}$ & ${36.55}$ \tiny ${ \pm 0.24}$  \\
			VAT \tiny{\cite{miyato2018virtual}} & ${79.81}$ \tiny ${ \pm 1.17}$ & ${74.66}$ \tiny ${ \pm 2.12}$ & ${41.03}$ \tiny ${ \pm 1.79}$ & ${10.51}$ \tiny ${ \pm 0.12}$ & ${32.14}$ \tiny ${ \pm 0.19}$   \\
			MeanTeacher \tiny{\cite{tarvainen2017mean}} & ${76.37}$ \tiny ${ \pm 0.44}$ & ${70.09}$ \tiny ${ \pm 1.60}$ & ${37.46}$ \tiny ${ \pm 3.30}$ & ${8.10}$ \tiny ${ \pm 0.21}$ & ${31.75}$ \tiny ${ \pm 0.23}$  \\
			MixMatch \tiny{\cite{berthelot2019mixmatch}} & ${65.76}$ \tiny ${ \pm 7.06}$ & ${36.19}$ \tiny ${ \pm 6.48}$ & ${13.63}$ \tiny ${ \pm 0.59}$ & ${6.66}$ \tiny ${ \pm 0.26}$ & ${27.78}$ \tiny ${ \pm 0.29}$  \\
			ReMixMatch \tiny{\cite{berthelot2019remixmatch}} & ${20.77}$ \tiny ${ \pm 7.48}$ & ${9.88}$ \tiny ${ \pm 1.03}$ & ${6.30}$ \tiny ${ \pm 0.05}$ & ${4.84}$ \tiny ${ \pm 0.01}$ & $\underline{20.02}$ \tiny ${ \pm 0.27}$  \\
			UDA \tiny{\cite{xie2020unsupervised}} & ${34.53}$ \tiny ${ \pm 10.69}$ & ${10.62}$ \tiny ${ \pm 3.75}$ & ${5.16}$ \tiny ${ \pm 0.06}$ & ${4.29}$ \tiny ${ \pm 0.07}$ & ${22.49}$ \tiny ${ \pm 0.23}$  \\
			FixMatch \tiny{\cite{sohn2020FixMatch}} & ${24.79}$ \tiny ${ \pm 7.65}$ & ${7.47}$ \tiny ${ \pm 0.28}$ & ${4.86}$ \tiny ${ \pm 0.05}$ & ${4.21}$ \tiny ${ \pm 0.08}$ & ${22.20}$ \tiny ${ \pm 0.12}$   \\
			Dash \tiny{\cite{xu2021dash}} & ${27.28}$ \tiny ${ \pm 14.09}$ & ${8.93}$ \tiny ${ \pm 3.11}$ & ${5.16}$ \tiny ${ \pm 0.23}$ & ${4.36}$ \tiny ${ \pm 0.11}$ & ${21.88}$ \tiny ${ \pm 0.07}$  \\
			MPL \tiny{\cite{pham2021meta}} & ${23.55}$ \tiny ${ \pm 6.01}$ & ${6.62}$ \tiny ${ \pm 0.91}$ & ${5.76}$ \tiny ${ \pm 0.24}$ & ${4.55}$ \tiny ${ \pm 0.04}$ & ${21.74}$ \tiny ${ \pm 0.09}$  \\
			FlexMatch \tiny{\cite{zhang2021flexmatch}} & ${13.85}$ \tiny ${ \pm 12.04}$ & ${4.97}$ \tiny ${ \pm 0.06}$ & ${4.98}$ \tiny ${ \pm 0.09}$ & ${4.19}$ \tiny ${ \pm 0.01}$ & ${21.90}$ \tiny ${ \pm 0.15}$  \\
			FreeMatch \tiny{\cite{wang2022freematch}} & $\underline{8.07}$ \tiny ${ \pm 4.24}$ & $\underline{4.90}$ \tiny ${ \pm 0.04}$ & ${4.88}$ \tiny ${ \pm 0.18}$ & ${4.10}$ \tiny ${ \pm 0.02}$ & ${21.68}$ \tiny ${ \pm 0.03}$  \\
			CoMatch \tiny{\cite{li2021comatch}} & - & ${6.91}$ \tiny ${ \pm 1.39}$ & ${4.91}$ \tiny ${ \pm 0.33}$ & ${4.27}$ \tiny ${ \pm 0.12}$ & ${22.11}$ \tiny ${ \pm 0.22}$  \\
			SimMatch \tiny{\cite{zheng2022simmatch}}  & - & ${5.60}$ \tiny ${ \pm 1.37}$ & $\underline{4.84}$ \tiny ${ \pm 0.39}$ & $\underline{3.96}$ \tiny ${ \pm 0.01}$ & ${20.58}$ \tiny ${ \pm 0.11}$  \\
			SimMatchV2 \tiny{\cite{zheng2023simmatchv2}}  & - & $\underline{4.90}$ \tiny ${ \pm 0.16}$ & ${5.04}$ \tiny ${ \pm 0.09}$ & ${4.33}$ \tiny ${ \pm 0.16}$ & ${21.37}$ \tiny ${ \pm 0.20}$  \\
			\midrule
			\textit{InfoMatch} & $\bm{4.39}$ \tiny ${ \pm 0.22}$ & $\bm{4.22}$ \tiny ${ \pm 0.14}$ & $\bm{4.01}$ \tiny ${ \pm 0.07}$ & $\bm{3.29}$ \tiny ${ \pm 0.08}$ & $\bm{19.47}$ \tiny{$\pm $0.56}\\
			\midrule
			Fully-Supervised & \multicolumn{4}{c|}{${4.62}$ \tiny ${ \pm 0.05}$} & ${19.30}$ \tiny ${ \pm 0.09}$ \\
			\bottomrule
		\end{tabular*}
		\caption{Top-1 error rates $(\%)$ on CIFAR-10/100 datasets. \textbf{Bold} indicates the best result, while \underline{underline} indicates the second-best result.}\label{cifar}
	\end{table*}
	\section{Experimental Results}
	\subsection{Experimental Setup}
	We evaluate \textit{InfoMatch} on well-known benchmark datasets, including CIFAR-10/100~\cite{krizhevsky2009learning}, SVHN~\cite{netzer2011reading}, STL-10~\cite{coates2011analysis}, and ImageNet~\cite{deng2009imagenet}. Additionally, we conduct experiments using varying amounts of labeled data.
	
	To ensure a fair comparison, we follow the experimental setup as in FixMatch \cite{sohn2020FixMatch} and FreeMatch \cite{wang2022freematch}. Specifically, we employ standard stochastic gradient descent algorithm with cosine learning rate decay as the optimizer across all datasets, with an initial learning rate of 0.03 and a momentum of 0.9. For all experiments, we set the total number of iterations to $2^{20}$. \textit{InfoMatch} performance is then evaluated using the EMA with a parameter of 0.999. Additionally, for ImageNet, we maintain a batch size of 128 for both labeled and unlabeled samples, \ie, $n^b_l = n^b_u = 128$, and utilize the ResNet-50 architecture \cite{he2016deep}. While for other datasets, we adjust the batch sizes to $n^b_l = 64$ and $n^b_u = 448$, and employ the Wide ResNet variants, such as Wide ResNet-28-2 \cite{zagoruyko2016wide} and Wide ResNet-28-8 \cite{zhou2020time}.
	
	To evaluate the impact of the two distinct terms of the loss Eq.~\eqref{oveallloss} in \textit{InfoMatch}, we introduce a non-negative hyperparameter $\lambda$. Then, we ensure an equitable representation of both real and pseudolabels in the likelihood function and maintain equilibrium between RandAugment~\cite{cubuk2020randaugment} and CutMix~\cite{yun2019CutMix}. Subsequently, we adjust the parameter $\lambda$ that regulates the entropy bounds to $0.002$.
	
	In addition, we employ distinct threshold selection strategies to eliminate the impact of pseudolabel errors on \textit{InfoMatch}. For SVHN, we adopt a fixed threshold of 0.95. Conversely, for other datasets, we utilize the self-adaptive thresholding method proposed by FreeMatch, which measures the overall learning progress by utilizing the expectation of the highest confidence across all unlabeled data in the current batch as the global threshold and assess the class-specific learning status by calculating the average prediction probabilities corresponding to each class.
	
	Finally, to guarantee a precise and unbiased evaluation, we conduct multiple training sessions for each model using various random seeds and calculate the mean and standard deviation of the optimal accuracy achieved.
	\subsection{Main Results}
	We compare \textit{InfoMatch} with full-supervised learning method and a range of representative semi-supervised learning methods, including pseudolabel-based methods such as FlexMatch~\cite{zhang2021flexmatch} and FreeMatch~\cite{wang2022freematch}, as well as graph-based methods like 
	SimMatch\cite{zheng2022simmatch} and SimMatchV2~\cite{zheng2023simmatchv2}. The Top-1 error rates for CIFAR-10/100, SVHN, and STL-10 under various labeled data sizes are presented in Tables~\ref{cifar} and \ref{svhn}.
	
	Based on these results, it is evident that \textit{InfoMatch} exhibits superior performance across all benchmarks, enhancing performance by an average of $1.49\%$, $0.55\%$, $0.13\%$, and $2.12\%$ on the four datasets. In particular, \textit{InfoMatch} impressively lower the mean Top-1 error rate from $8.07\%$ to $4.39\%$ ($-3.68\%$) in the CIFAR-10 dataset, which contains just 10 labeled data. While in in the STL-10 dataset with 40 labeled data, the Top-1 error rate is reduces from $13.74\%$ to $9.86\%$ (
	$-3.88\%$). Additionally, on both CIFAR-10 and SVHN datasets, our approach achieves remarkable performance improvements, surpassing not only other semi-supervised baselines but also outperforming fully-supervised learning methods across all benchmarks. Furthermore, comparing to other methods, \textit{InfoMatch} exhibits a smaller standard deviation across multiple experiments with varying seeds, thereby highlighting the superior stability and robustness of \textit{InfoMatch}.
	
	It is worth noting that \textit{InfoMatch} significantly outperforms other methods when the number of labeled data is extremely limited. Especially in the CIFAR-10 dataset, even when only one labeled data is available for each class, the average Top-1 error rate reachs $4.39\%$, which is lower than that of fully-supervised learning methods ($-0.23\%$).
	
	We assess \textit{InfoMatch} on ImageNet to show its efficacy. Following the settings of FreeMatch, we select 100 labeled samples per class. Table~\ref{imagenet} illustrates a comparison of the average Top-1 and Top-5 errors across different models. With identical parameters, \textit{InfoMatch} has a significant improvement of $4.49\%$ in Top-1 accuracy and $2.86\%$ in Top-5 accuracy over FreeMatch.
	
	\begin{table*}[!tp]
		\centering
		\begin{tabular*}{0.92\textwidth}{@{\extracolsep{\fill}\,}l|ccc|cc} 
			\toprule
			Dataset &  \multicolumn{3}{c|}{SVHN}  &  \multicolumn{2}{c}{STL-10}  \\
			\midrule
			\#~Label & 40 & 250 & 1000 & 40 & 1000  \\
			\midrule
			$\Pi$ Model \tiny{\cite{rasmus2015semi}} &  ${67.48}$ \tiny ${ \pm 0.95}$ & ${13.30}$ \tiny ${ \pm 1.12}$ & ${7.16}$ \tiny ${ \pm 0.11}$ & ${74.31}$ \tiny ${ \pm 0.85}$ & ${32.78}$ \tiny ${ \pm 0.40}$  \\
			Pseudo Label \tiny{\cite{lee2013pseudo}} & ${64.61}$ \tiny ${ \pm 5.6}$ & ${15.59}$ \tiny ${ \pm 0.95}$ & ${9.40}$ \tiny ${ \pm 0.32}$ & ${74.68}$ \tiny ${ \pm 0.99}$ & ${32.64}$ \tiny ${ \pm 0.71}$  \\
			VAT \tiny{\cite{miyato2018virtual}} & ${74.75}$ \tiny ${ \pm 3.38}$ & ${4.33}$ \tiny ${ \pm 0.12}$ & ${4.11}$ \tiny ${ \pm 0.20}$ & ${74.74}$ \tiny ${ \pm 0.38}$ & ${37.95}$ \tiny ${ \pm 1.12}$  \\
			MeanTeacher \tiny{\cite{tarvainen2017mean}} & ${36.09}$ \tiny ${ \pm 3.98}$ & ${3.45}$ \tiny ${ \pm 0.03}$ & ${3.27}$ \tiny ${ \pm 0.05}$ & ${71.72}$ \tiny ${ \pm 1.45}$ & ${33.90}$ \tiny ${ \pm 1.37}$  \\
			MixMatch \tiny{\cite{berthelot2019mixmatch}} & ${30.60}$ \tiny ${ \pm 8.39}$ & ${4.56}$ \tiny ${ \pm 0.32}$ & ${3.69}$ \tiny ${ \pm 0.37}$ & ${54.93}$ \tiny ${ \pm 0.96}$ & ${21.70}$ \tiny ${ \pm 0.68}$  \\
			ReMixMatch \tiny{\cite{berthelot2019remixmatch}} & ${24.04}$ \tiny ${ \pm 9.13}$ & ${6.36}$ \tiny ${ \pm 0.22}$ & ${5.16}$ \tiny ${ \pm 0.31}$ & ${32.12}$ \tiny ${ \pm 6.24}$ & ${6.74}$ \tiny ${ \pm 0.14}$  \\
			UDA \tiny{\cite{xie2020unsupervised}} & ${5.12}$ \tiny ${ \pm 4.27}$ & $\underline{1.92}$ \tiny ${ \pm 0.05}$ & $\underline{1.89}$ \tiny ${ \pm 0.01}$ & ${37.42}$ \tiny ${ \pm 8.44}$ & ${6.64}$ \tiny ${ \pm 0.17}$  \\
			FixMatch \tiny{\cite{sohn2020FixMatch}} & ${3.81}$ \tiny ${ \pm 1.18}$ & ${2.02}$ \tiny ${ \pm 0.02}$ & ${1.96}$ \tiny ${ \pm 0.03}$ & ${35.97}$ \tiny ${ \pm 4.14}$ & ${6.25}$ \tiny ${ \pm 0.33}$   \\
			Dash \tiny{\cite{xu2021dash}} & ${2.19}$ \tiny ${ \pm 0.18}$ & ${2.04}$ \tiny ${ \pm 0.02}$ & ${1.97}$ \tiny ${ \pm 0.01}$ & ${34.52}$ \tiny ${ \pm 4.30}$ & ${6.39}$ \tiny ${ \pm 0.56}$  \\
			MPL \tiny{\cite{pham2021meta}} & ${9.33}$ \tiny ${ \pm 8.02}$ & ${2.29}$ \tiny ${ \pm 0.04}$ & ${2.28}$ \tiny ${ \pm 0.02}$ & ${35.76}$ \tiny ${ \pm 4.83}$ & ${6.66}$ \tiny ${ \pm 0.00}$  \\
			FlexMatch \tiny{\cite{zhang2021flexmatch}} & ${8.19}$ \tiny ${ \pm 3.20}$ & ${6.59}$ \tiny ${ \pm 2.29}$ & ${6.72}$ \tiny ${ \pm 0.30}$ & ${29.15}$ \tiny ${ \pm 4.16}$ & ${5.77}$ \tiny ${ \pm 0.18}$  \\
			FreeMatch \tiny{\cite{wang2022freematch}} & $\underline{1.97}$ \tiny ${ \pm 0.02}$ & ${1.97}$ \tiny ${ \pm 0.01}$ & ${1.96}$ \tiny ${ \pm 0.03}$ & ${15.56}$ \tiny ${ \pm 0.55}$ & $\underline{5.63}$ \tiny ${ \pm 0.15}$  \\
			CoMatch \tiny{\cite{li2021comatch}}  & ${8.20}$ \tiny ${ \pm 5.32}$ & ${2.16}$ \tiny ${ \pm 0.04}$ & ${2.01}$ \tiny ${ \pm 0.04}$ & $\underline{13.74}$ \tiny ${ \pm 4.20}$ & ${5.71}$ \tiny ${ \pm 0.08}$  \\
			SimMatch \tiny{\cite{zheng2022simmatch}} & ${7.60}$ \tiny ${ \pm 2.11}$ & ${2.48}$ \tiny ${ \pm 0.61}$ & ${2.05}$ \tiny ${ \pm 0.05}$ & ${16.98}$ \tiny ${ \pm 4.24}$ & ${5.74}$ \tiny ${ \pm 0.31}$  \\
			SimMatchV2 \tiny{\cite{zheng2023simmatchv2}} & ${7.92}$ \tiny ${ \pm 2.80}$ & ${2.92}$ \tiny ${ \pm 0.81}$ & ${2.85}$ \tiny ${ \pm 0.91}$ & ${15.85}$ \tiny ${ \pm 2.62}$ & ${5.65}$ \tiny ${ \pm 0.26}$  \\
			\midrule
			\textit{InfoMatch} & $\bm{1.84}$ \tiny ${ \pm 0.07}$ & $\bm{1.79}$ \tiny ${ \pm 0.01}$ & $\bm{1.75}$ \tiny ${ \pm 0.03}$ & $\bm{9.86}$ \tiny ${ \pm 1.13}$ & $\bm {5.27}$ \tiny ${ \pm 0.09}$\\
			\midrule
			Fully-Supervised & \multicolumn{3}{c|}{${2.13}$ \tiny ${ \pm 0.01}$} & \multicolumn{2}{c}{-} \\
			\bottomrule
		\end{tabular*}
		\caption{Top-1 error rates $(\%)$ on SVHN and STL-10 datasets. \textbf{Bold} indicates the best result, while \underline{underline} indicates the second-best result.}\label{svhn}
	\end{table*}
	\begin{table}[tp]
	\centering
	\begin{tabular*}{0.46\textwidth}{@{\extracolsep{\fill}\,}l|cc} 
		\toprule
		Method & Top-1 &  Top-5 \\
		\midrule
		FixMatch \tiny{\cite{sohn2020FixMatch}} & $43.66$ & $21.80$ \\
		FlexMatch \tiny{\cite{zhang2021flexmatch}} & $41.85$ & $19.48$\\
		FreeMatch \tiny{\cite{wang2022freematch}} & $40.57$ & $18.77$\\
		\midrule
		\textit{InfoMatch} & $\bm{36.21}$ & $\bm{15.91}$\\
		\bottomrule
	\end{tabular*}
	\caption{Error rates (\%) on ImageNet with 100 labels per class.}\label{imagenet}
\end{table}
	\subsection{Ablation Study}
	\textit{InfoMatch} primarily explores how to utilize of unlabeled data, where we use the weak-to-strong supervision based on two strong augmentation methods, RandAugment~\cite{cubuk2020randaugment} and CutMix~\cite{yun2019CutMix}, represented by $\mathcal{L}^u_{\rm upper}$ and $\mathcal{L}^c_{\rm upper}$ respectively, as well as the contrastive loss $\mathcal{L}_{\rm lower}$ between two strong augmented views. To evaluate the impact of these losses, we conduct in-depth ablation studies on the CIFAR-10 dataset with 40 labels. 
	
	To determine the effectiveness of each loss term, we conduct experiments by excluding some components. As shown in Fig.~\ref{fig:ablation}, Results indicate that eliminating certain losses lead to different levels of decline in model performance. This suggests that each component is important and complementary. 
	
	Comparing to RandAugment~\cite{cubuk2020randaugment}, CutMix~\cite{yun2019CutMix} employs a stronger augmentation by combining two images, addressing the limitations of low mask-based augmentation ratio. This approach enables \textit{InfoMatch} to more effectively comprehend and capture the essential features, thus enhancing the generalization performance of \textit{InfoMatch}. However, utilizing CutMix exclusively may distort the original features, potentially leading \textit{InfoMatch} to learning incorrect or distorted features during training, ultimately resulting in performance deterioration. 

	\begin{figure}[!t]
	\centering
	\includegraphics[width=0.94\linewidth]{./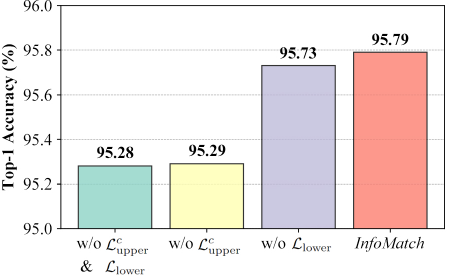}
	\caption{Ablation studies on the Cifar-10 dataset with 40 labels.}
	\label{fig:ablation}
\end{figure}
	By minimizing $\mathcal{L}_{\rm lower}$, we optimize \textit{InfoMatch} from a different perspective. \textit{InfoMatch} maximizes the mutual information between features in different augmented views, enabling \textit{InfoMatch} to learn feature representations that express the same information from different perspectives. Consequently, the performance of \textit{InfoMatch} is further improved.

		\begin{figure}[!t]
		\centering
		\includegraphics[width=0.48\textwidth]{./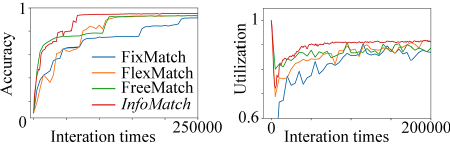}
		\caption{Compare \textit{InfoMatch} with FixMatch, FlexMatch, and FreeMatch on the CIFAR-10 dataset with 40 labeled data in terms of Top-1 accuracy and utilization of unlabeled data. (a) Top-1 accuracy, and (b) Utilization of unlabeled data.}
		\label{Analysis}
	\end{figure}
	\subsection{Qualitative Analysis}
	To further evaluate the performance of \textit{InfoMatch} in classification tasks, we conduct a comparative analysis with FixMatch, FlexMatch, and FreeMatch in terms of Top-1 accuracy and utilization of unlabeled data. This analysis is conducted using the CIFAR-10 dataset with 40 labeled data.
	
	Initially, a noteworthy advantage of \textit{InfoMatch} is its excellent convergence speed, which is illustrated in Fig.~\ref{Analysis}(a). Within 100 epochs, \textit{InfoMatch} achieves a $94.70\%$ accuracy rate. By the time it reaches 200 epochs, our accuracy even exceeds the highest accuracy achieved by FreeMatch, reaching $95.49\%$. Additionally, as shown in Fig.~\ref{Analysis}(b), in contrast to other methods, the utilization rate of unlabeled data in \textit{InfoMatch} quickly rises to and stabilizes above $90\%$ after a brief decline, further encouraging \textit{InfoMatch} to incorporate more unlabeled samples into training during the early stages. 
	\section{Conclusion}
	We present an extension of the entropy neural estimation method to the semi-supervised domain, enabling to measure the uncertainty of classification models from an entropy perspective. We introduce an upper bound on the entropy of the ground-truth posterior, indicating that maximizing the likelihood function of \textit{InfoMatch} corresponds to minimizing the upper bound on the posterior entropy. Similarly, we establish a lower bound on entropy for a given dataset, demonstrating that maximizing the mutual information between different augmented views of the data aligns with maximizing the lower bound of entropy. Drawing upon these two theories, we introduce \textit{InfoMatch} to optimize it from two perspectives. To leverage unlabeled data more effectively, we employ strategies like weak-to-strong and pseudo supervision, and introduce CutMix as a new strong augmentation view. Using optimization methods such as gradient descent, \textit{InfoMatch} ensures that its predicted probability distribution gradually aligns with the ground-truth distribution, even in the presence of input perturbations. 
	We introduce \textit{InfoMatch} that seamlessly integrates pseudo supervision, consistency regularization, and mixing strategies to improve its generalization capabilities. The results from various SSL benchmark tests demonstrate that \textit{InfoMatch} achieves good performance, especially in scenarios with limited labels, surpassing even fully-supervised learning methods. The experimental results confirm the effectiveness of \textit{InfoMatch}.
	
	\section*{Acknowledgements}
	This work was supported by the National Natural Science Foundation of China under Grant No.~62176108 , Natural Science Foundation of Qinghai Province of China under No.~2022-ZJ-929, Fundamental Research Funds for the Central Universities under No.~lzujbky-2022-ct06, and Supercomputing Center of Lanzhou University. The corresponding author of the paper is Kun Zhan.
	
	\bibliographystyle{named}
	\bibliography{ijcai24}
\end{document}